\newenvironment{talign}
  {\let\displaystyle\textstyle\align}
  {\endalign}
\newenvironment{talign*}
  {\begingroup\let\displaystyle\textstyle
   \begin{equation*}\begin{aligned}}
  {\end{aligned}\end{equation*}\endgroup}
\newcommand{\cA}{\mathcal{A}}
\newcommand{\cS}{\mathcal{S}}
\newcommand{\cM}{\mathcal{M}}
\newcommand{\cF}{\mathcal{F}}
\newcommand{\KL}{\textup{KL}}
\newcommand{\bR}{\mathbb{R}}
\newcommand{\oA}{\overline{A}}
\newcommand{\oQ}{\overline{Q}}
\newcommand{\Proj}{\textup{Proj}}
\newcommand{\hpi}{\widehat{\pi}}
\DeclareMathOperator*{\argmax}{argmax}
\newcommand{\bE}{\mathbb{E}}
\theoremstyle{definition}
\newtheorem{theorem}{Theorem}
\newtheorem{lemma}{Lemma}
\newtheorem{rmk}{Remark}
\newtheorem{defi}{Definition}
\newtheorem{example}{Example}
\begin{document}
\title{Optimism as Risk-Seeking in Multi-Agent Reinforcement Learning}
\author{Runyu Zhang$^{1}$, Na Li$^{2}$, Asuman Ozdaglar$^{1}$, Jeff Shamma$^{3}$, Gioele Zardini$^{1}$
\thanks{$^{1}$Runyu Zhang, Asuman Ozdaglar, and Gioele Zardini are with the Laboratory for Information \& Decision Systems,  Massachusetts Institute of Technology, {\{runyuzha, asuman, gzardini\}@mit.edu}}
\thanks{$^2$Na Li is with Harvard University, {nali@seas.harvard.edu}}
\thanks{$^3$Jeff Shamma is with the University of Illinois at Urbana-Champaign, {jshamma@illinois.edu}.}
}

\maketitle
\vspace{-10pt}
\begin{abstract}
Risk sensitivity has become a central theme in reinforcement learning (RL), where convex risk measures and robust formulations provide principled ways to model preferences beyond expected return. 
Recent extensions to multi-agent RL (MARL) have largely emphasized the risk-averse setting, prioritizing robustness to uncertainty.
In cooperative MARL, however, such conservatism often leads to suboptimal equilibria, and a parallel line of work has shown that \emph{optimism} can promote cooperation.
Existing optimistic methods, though effective in practice, are typically heuristic and lack theoretical grounding.
Building on the dual representation for convex risk measures, we propose a principled framework that interprets \emph{risk-seeking objectives} as optimism.
We introduce \emph{optimistic value functions}, which formalize optimism as divergence-penalized risk-seeking evaluations.
Building on this foundation, we derive a policy-gradient theorem for optimistic value functions, including explicit formulas for the entropic risk/KL-penalty setting, and develop decentralized optimistic actor-critic algorithms that implement these updates.
Empirical results on cooperative benchmarks demonstrate that risk-seeking optimism consistently improves coordination over both risk-neutral baselines and heuristic optimistic methods.
Our framework thus unifies risk-sensitive learning and optimism, offering a theoretically grounded and practically effective approach to cooperation in MARL.
\end{abstract}

\section{Introduction}
Decision-making under uncertainty is central to reinforcement learning (RL). 
Standard RL optimizes expected return and is therefore \emph{risk-neutral} -- an assumption that is often too restrictive in risk-critical domains.
To overcome this, a rich body of work has studied \emph{risk-sensitive} and \emph{robust} RL.
Risk-sensitive RL leverages convex risk measures~\cite{follmer2002convex} such as entropic risk and CVar to shape Bellman updates and encode preferences over full return distributions~\cite{ruszczynski2010risk,chow2015risk,fei2021exponential,huang2021convergence}, while robust RL models distributional uncertainty via ambiguity sets of divergence-regularized optimization~\cite{ iyengar2005robust,nilim2005robust, panaganti2022sample,shi2022distributionally,shi2023curious}. 
These two perspectives are closely linked: risk-sensitive and robust Markov Decision Processes (MDPs) can be shown equivalent under dual formulations~\cite{ruszczynski2010risk, bauerle2022distributionally,osogami2012robustness,zhang2023softrobust}, yielding both theoretical guarantees and practical policy-gradient algorithms.

In multi-agent RL (MARL), risk-sensitive and robust formulations have also been explored, often addressing not only environment uncertainty but also unpredictable behavior of other agents~\cite{zhang20robust,shi2024breaking,lanzetti2025strategicallyrobustgametheory,mazumdar2025tractable}. 
Most of these works, however, adopt a \emph{risk-averse} stance: prioritizing safety at the expense of cooperation.
In cooperative tasks, this conservatism can prevent agents from reaching optimal joint solutions (c.f. \cite{wiegand2004analysis,matignon2007hysteretic, bellemare2023distributional,zhao2023optimistic}).
This limitation is exemplified by the problem of relative overgeneralization (RO)~\cite{wiegand2004analysis}, where agents converge to suboptimal equilibria by adapting too cautiously to noisy or misaligned partner behavior.
To overcome RO, a parallel line of research has explored \emph{optimism} or \emph{risk-seeking} as a driver of cooperation.
Early approaches such as distributed~$Q$-learning~\cite{Lauer00distributedQ}, hysteretic~$Q$-learning~\cite{matignon2007hysteretic}, and lenient learning~\cite{panait2006lenient,Wei16Lenient} softened negative updates to preserve cooperative actions.
Later methods such as FMQ learning~\cite{kapetanakis2002fmq} and optimistic exploration~\cite{zhang2025optimistic, zhao2023conditionally} reinforced this idea.
More recently, \emph{optimistic policy-gradient} algorithms (e.g., optimistic PPO variants) have been proposed~\cite{zhao2023optimistic}, biasing updates toward higher-value estimates to avoid being overly pessimistic about joint actions early on, thereby mitigating relative overgeneralization while preserving optimality at convergence.

Despite their empirical success, most of these optimistic MARL methods are heuristic.
Optimism is typically introduced through asymmetric learning rates~\cite{matignon2007hysteretic}, ad hoc return reshaping~\cite{kapetanakis2002fmq}, or tailored initialization strategies.
What is missing is a \emph{principled framework} that explains optimism mathematically and connects it to established concepts in risk-sensitive learning.
This motivates our central question: \emph{Can risk-seeking objectives provide a rigorous mechanism for optimism in MARL, thereby promoting cooperation in a theoretically grounded way?}

\paragraph*{Statement of contribution}
This work establishes a principled foundation for optimism in MARL by introducing optimistic value functions derived from convex risk measures.
We show that divergence-penalized dual formulations naturally instantiate optimism as a controlled form of risk-seeking, unifying heuristic optimistic updates with risk-sensitive RL theory.
Building on this framework, we develop a policy-gradient theorem for optimistic value functions, including explicit sample-based formulas in the entropic risk/KL-penalty setting.
We then design decentralized optimistic actor-critic algorithms that realize these updates in practice and validate them empirically on cooperative benchmarks, where they achieve more reliable coordination than both risk-neutral baselines and existing heuristic optimistic methods.



\section{Preliminaries}
This section sets up notation for MDPs and their multi-agent extensions, introduces the optimistic value functions that are central to our work, and recalls the basic properties of convex risk measures.
\subsection{Markov Decision Processes (MDPs)} 
A finite Markov decision process (MDP) is defined as a tuple~$\cM = (\cS, \cA, P, r, \gamma, \rho)$, where~$\cS$ denotes a finite set of states,~$\cA$ a finite set of actions,~$P(s'|s,a)$ the transition probability of moving from state~$s$ to~$s'$ when action~$a$ is taken,{~$r: \cS \times \cA \to \bR$ the reward function,~$\gamma \in [0,1)$ the discount factor, and~$\rho$ the initial state distribution over~$\cS$.} A stochastic policy $\pi: \cS\to \Delta^{|\cA|}$  specifies a strategy where the agent chooses its action based on the current state in a stochastic fashion; specifically, the probability of choosing action~$a$ at state~$s$ is given by~$\Pr(a|s) = \pi(a|s)$. 
For notational simplicity, we use~$\pi_s(\cdot)$ as shorthand for~$\pi(\cdot | s)$. 
For a given stationary policy~$\pi$ , we denote the discounted state visitation distribution by 
\begin{align*}
  \textstyle  d^{\pi}(s):=(1-\gamma)\sum_{t=0}^{+\infty}\gamma^t \Pr(s_t = s ~|~ s_0\sim\rho, a_\tau\sim \pi_{s_\tau}).
\end{align*}

\subsection{Multi-agent MDPs} 
In multi-agent settings the action space factors as~$\cA = \cA_1\times\cA_2\times\cdots\times \cA_n$, so a joint action~$a=(a_1, a_2,\cdots,a_n)$ is composed of individual agent actions~$a_i$.
We focus on decentralized product policies
\begin{align}\label{eq:decentralized-policy}
   \textstyle  \pi^\theta(a|s) = \prod_{i=1}^n \pi_i^{\theta}(a_i|s),
\end{align}
where parameters are grouped as~$\theta = (\theta_1, \theta_2,\cdots,\theta_n)$.
A \emph{direct parametrization} means each~$\theta_i\in \bR^{\cS\times\cA_i}$ encodes action probabilities explicitly, i.e.,
\begin{align}\label{eq:direct-parameterization}
    \textstyle \theta_{s, a_i} = \pi_i(a_i|s),
\end{align}
where, for notational simplicity, we abbreviate~$\theta_i(s,a_i)$ as~$\theta_{s,a_i}$.
\subsection{Optimistic Value Functions} 
To encourage cooperation in decentralized learning, we evaluate a baseline policy~$\pi$ via \emph{optimistic value functions} and~$Q$-functions that allow deviations penalized by a divergence.
For auxiliary policies~$\{\hat{\pi}_t\}_{t\geq 0}$,
\begin{small}
\begin{align}
        &\textstyle V^\pi\!(s) \!= \!\max_{\hpi}\! \left[\!\bE_{s_t\!, a_t\!\sim\! \hpi}\!\sum_{t\!=\!0}^{+\!\infty}\!\gamma^t\! \left(r(s_t, \!a_t) \!-\!\! D(\hpi_{t, s_t}||\pi_{s_t})\right)\!\Big|s_0\!=\!s\right],\notag\\
        &\textstyle Q^\pi(s,a) =r(s,a)+ \max_{\{\hpi_t\}_{t\ge1}} \notag \\
        &\textstyle \left[\bE_{s_t, a_t\sim \hpi}\!\sum_{t=1}^{+\infty}\!\gamma^t\! \left(r(s_t, a_t) \!-\!\! D(\hpi_{t, s_t}||\pi_{s_t})\right)\!\Big|s_0\!=\!s, a_0\!=\!a\right].\!\!\!\label{eq:def-optimistic-value-function}\!\!
\end{align}
\end{small}
Here,~$D(\hat{\pi} || \pi)$ quantifies the deviation of the auxiliary policy~$\hat{\pi}$ from the baseline policy~$\pi$, for instance through the KL divergence.
The term \emph{optimistic} reflects the maximization over the auxiliary policy sequence~${\hat \pi_t}$ in the definitions above.
Rather than directly evaluating the return of~$\pi$,the value functions~$V^\pi$ and~$Q^\pi$ incorporate deviations~$\hat{\pi}$ that maximize the expected cumulative reward, while paying a penalty proportional to how far~$\hat{\pi}$ strays from~$\pi$.
This construction encourages the agent to envision the best-case performance of~$\pi$ under locally improved decisions, yielding an evaluation that is deliberately biased toward potentially higher-performing behaviors.
The advantage function is defined as~$A^\pi(s,a) = Q^\pi(s,a) - V^\pi(s)$.

\subsection{Convex Risk Measures}
Let~$\cF$ be the set of real-valued functions on a finite action set~$\cA$.
A map~$\sigma:\cF \to \bR$ is a convex risk measure if it satisfies:


\begin{enumerate}
\item \emph{Monotonicity}: For~$f', f\in \cF$, if~$f'\le f$, then~$\sigma(f)\le \sigma(f').$
\item \emph{Translation invariance}: for any $f\in \cF, m \in \bR$, $\sigma(f + m) = \sigma(f) - m$ .
\item \emph{Convexity}: for any $f', f\in \cF, \lambda\in[0,1]$, $\sigma(\lambda f + (1-\lambda) f') \le \lambda\sigma(f) + (1-\lambda) \sigma(f').$
\end{enumerate} 
By standard duality theory, convex risk measures admit the following representation~\cite{follmer2002convex}.
\begin{theorem}[Dual Representation Theorem \cite{follmer2002convex}]\label{thm:convex-risk-dual-representation}
The function $\sigma:\cF\to \bR$ is a convex risk measure if and only if there exists a ``penalty function'' $D(\cdot): \Delta^{|\cA|}\to \bR$ such that 
\begin{equation}\label{eq:dual-representation-d-to-V}
   \textstyle \sigma(f) = \sup_{\widehat\pi\in\Delta^{|\cA|}} \left(-\bE_{\widehat\pi} f - D(\widehat\pi)\right).
\end{equation}
In specific, $D$ can be written in the following form:
\vspace{-5pt}
\begin{align}\label{eq:dual-representation-V-to-d}
   \textstyle  D(\widehat\pi) =  \sup_{f\in\cF}\left( -\sigma(f) - \bE_{a\sim \widehat\pi}f(a)\right).
\end{align}
\end{theorem}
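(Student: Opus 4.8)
The plan is to recognize the claimed identity as an instance of Fenchel--Moreau biconjugation, specialized to a finite action set so that $\cF$ is just the finite-dimensional space $\bR^{|\cA|}$. I would establish the two implications separately: the ``if'' direction by verifying the three defining axioms directly from the representation, and the ``only if'' direction by computing the convex conjugate of $\sigma$ and identifying its effective domain with the negated probability simplex.

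For the easy direction, suppose $\sigma(f) = \sup_{\hpi}\left(-\bE_{\hpi} f - D(\hpi)\right)$. Monotonicity follows because $f'\le f$ implies $\bE_{\hpi} f' \le \bE_{\hpi} f$ for every $\hpi\in\Delta^{|\cA|}$ (the weights being nonnegative), so the quantity inside the supremum is pointwise larger for $f'$, giving $\sigma(f)\le\sigma(f')$. Translation invariance follows from $\bE_{\hpi}(f+m) = \bE_{\hpi} f + m$ and pulling the constant $-m$ outside the supremum. Convexity follows because, for each fixed $\hpi$, the map $f\mapsto -\bE_{\hpi} f - D(\hpi)$ is affine, and a pointwise supremum of affine (hence convex) functions is convex.

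For the converse, I would view $\sigma:\bR^{|\cA|}\to\bR$ as a finite, everywhere-defined convex function, which is therefore continuous, proper, and closed, so the Fenchel--Moreau theorem yields $\sigma = \sigma^{**}$, i.e.
\[
\sigma(f) = \sup_{y\in\bR^{|\cA|}}\left(\langle y, f\rangle - \sigma^*(y)\right), \qquad \sigma^*(y) = \sup_{f\in\cF}\left(\langle y, f\rangle - \sigma(f)\right).
\]
The crux is to show $\sigma^*(y) = +\infty$ unless $y = -\hpi$ for some $\hpi\in\Delta^{|\cA|}$, so that the outer supremum may be restricted to $y = -\hpi$. Translation invariance forces the normalization: substituting $f + m\mathbf{1}$ produces a term $m(\langle y,\mathbf{1}\rangle + 1)$ that is unbounded as $m\to\pm\infty$ unless $\sum_a y_a = -1$. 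Monotonicity forces nonnegativity of $-y$: if $y_a > 0$ for some $a$, then taking $f = t e_a$ with $t\to+\infty$ and using $\sigma(t e_a)\le\sigma(0)$ makes $\langle y, f\rangle - \sigma(f)\ge t y_a - \sigma(0)$ diverge. Setting $\hpi := -y$ then gives $\hpi\in\Delta^{|\cA|}$, and under this identification $\langle y, f\rangle = -\bE_{\hpi} f$ while $\sigma^*(-\hpi) = \sup_f\left(-\bE_{\hpi} f - \sigma(f)\right) = D(\hpi)$, which is exactly \eqref{eq:dual-representation-V-to-d}; substituting back into the biconjugate recovers \eqref{eq:dual-representation-d-to-V}.

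The main obstacle is the domain-restriction step, namely establishing that the dual variable must lie in the negated simplex: this is precisely where monotonicity and translation invariance enter, since convexity alone only delivers the biconjugation. The remaining ingredients---continuity of finite convex functions on $\bR^n$ and the Fenchel--Moreau theorem---are standard and require no extra regularity because the action set is finite.
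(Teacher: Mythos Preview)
The paper does not prove this theorem; it is quoted from \cite{follmer2002convex}, with only the parenthetical remark afterward that ``$\sigma$ and $D$ serve as the Fenchel conjugate of each other.'' Your proposal is correct and is exactly the standard Fenchel--Moreau argument used in that reference, specialized to finite $\cA$ so that $\sigma$ is automatically continuous and closed. One small caveat worth noting: the statement asserts $D:\Delta^{|\cA|}\to\bR$ is everywhere finite, but your conjugate construction only yields $D(\hpi)=\sigma^*(-\hpi)$, which can equal $+\infty$ on part of the simplex (e.g., for $\sigma(f)=-f(a_0)$ one gets $D(\hpi)=+\infty$ unless $\hpi=e_{a_0}$); this is a mild imprecision in the theorem as stated rather than a gap in your proof, since such points do not contribute to the supremum in \eqref{eq:dual-representation-d-to-V}.
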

{Note that $\sigma$ and $D$ serve as the Fenchel conjugate of each other.} In most cases, a convex risk measure $\sigma(f)$ can be understood as quantifying the risk of a random variable taking values $f(a)$, where $a$ is sampled from some distribution $a \sim \pi$. Thus, commonly used risk measures are naturally tied to an underlying probability distribution $\pi \in \Delta^{|\cA|}$ (see, e.g., Example \ref{example:entropy-risk-measure}). In this work, we focus on such distribution-dependent risk measures, and therefore write $\sigma(\pi,\cdot)$ to emphasize the dependence on $\pi$. Correspondingly, in the dual representation theorem we denote the penalty term $D(\hat{\pi})$ of $\sigma(\pi,\cdot)$ by $D(\hat{\pi}||\pi)$. \footnote{The notation $D$ is intentionally overloaded: it denotes both the regularization term in \eqref{eq:def-optimistic-value-function} and the penalty function for a risk measure in \eqref{eq:dual-representation-d-to-V}. The connection between these two uses will become clear in subsequent sections.}

Here we provide an example of convex risk measure and its dual form. 
\begin{example}[Entropy risk measure \cite{follmer2002convex}]\label{example:entropy-risk-measure}
For a given $\beta>0$, the entropy risk measure takes the form: 
\begin{equation*}
\textstyle \sigma(\pi, f) =  \beta^{-1}\log\bE_{a\sim \pi}e^{-\beta f(a)}.
\end{equation*}
Its corresponding penalty function $D$ in the dual representation theorem is the KL divergence
\begin{equation*}
  \textstyle  D(\widehat\pi||\pi) = \beta^{-1}\KL(\widehat\pi||\pi) = \beta^{-1}\sum_{a\in\cA} \widehat\pi(a)\log\left({\widehat\pi(a)}/{\pi(a)}\right).
\end{equation*}
\end{example}

\section{Bellman Equation and Policy Gradient Theorem for Optimistic Value Functions}
This section develops the theoretical foundation for optimizing the optimistic value functions introduced earlier in \cref{eq:def-optimistic-value-function}. 
We first establish a Bellman equation that characterizes these functions, and then derive a policy-gradient theorem that enables their optimization.
\begin{lemma}[Bellman Equation]\label{lemma:Bellman} 
The optimistic value functions and~$Q$-functions in \cref{eq:def-optimistic-value-function} satisfy the following Bellman recursion:
 \begin{equation}
\begin{split}
      \textstyle V^\pi(s)  &= \max_{\hpi}\sum_a\hpi(a|s)Q^\pi(s,a) - D(\hpi_s||\pi_s)\\
      \textstyle\qquad ~~ &=\sigma(\pi_s, -Q^\pi(s,\cdot)),\\
    \textstyle Q^\pi(s,a) &= r(s,a) + \gamma\sum_{s'}P(s'|s,a) V^\pi(s') ,
\end{split}\label{eq:optimistic-value-function-bellman}
\end{equation}
where~$\sigma(\pi_s,\cdot)$ is the convex risk measure associated with the penalty~$D$, via \cref{eq:dual-representation-d-to-V}. 
Moreover, the auxiliary optimistic distribution~$\hpi_t$ in \cref{eq:def-optimistic-value-function} can be taken as a stationary policy:
    \begin{align}
      \textstyle  \hpi(\cdot|s) = \argmax_{\hpi}\sum_a\hpi(a|s)Q^\pi(s,a)- D(\hpi_{s}||\pi_{s}).
    \end{align}\label{eq:def-hpi}
\end{lemma}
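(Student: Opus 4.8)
The plan is to establish the two recursions by a standard dynamic-programming decomposition of the infinite-horizon objective, exploiting its additive separability across time together with the Markov property, and then to identify the resulting one-step optimization with the convex risk measure via \cref{thm:convex-risk-dual-representation}. Throughout, I treat $(V^\pi,Q^\pi)$ as defined by \cref{eq:def-optimistic-value-function} and verify that they satisfy \cref{eq:optimistic-value-function-bellman}.

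First I would prove the $Q$-recursion. In the definition of $Q^\pi(s,a)$ the initial action $a_0=a$ is fixed rather than sampled, so no penalty term appears at $t=0$; conditioning on $s_1\sim P(\cdot|s,a)$ and factoring out $\gamma$ gives
\[Q^\pi(s,a)=r(s,a)+\gamma\,\bE_{s_1\sim P(\cdot|s,a)}\Big[\max_{\{\hpi_t\}_{t\ge1}}\bE\Big[\sum_{t\ge1}\gamma^{t-1}\big(r(s_t,a_t)-D(\hpi_{t,s_t}\|\pi_{s_t})\big)\,\Big|\,s_1\Big]\Big].\]
The key step is to recognize the inner maximization as $V^\pi(s_1)$: by time-homogeneity of the MDP the tail objective starting from $s_1$ coincides, after the index shift $t\mapsto t-1$, with the definition of $V^\pi(s_1)$, and because the objective is a sum of per-step terms each depending only on the current state, the tail maximization decouples across states and may be pushed inside the expectation over $s_1$. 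This yields $Q^\pi(s,a)=r(s,a)+\gamma\sum_{s'}P(s'|s,a)V^\pi(s')$, the second line of \cref{eq:optimistic-value-function-bellman}.

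Next I would treat the $V$-recursion symmetrically. Splitting the $t=0$ term out of the definition of $V^\pi(s)$ and applying the same decomposition to the tail gives
\[V^\pi(s)=\max_{\hpi_0}\Big[\sum_a\hpi_0(a|s)\big(r(s,a)+\gamma\textstyle\sum_{s'}P(s'|s,a)V^\pi(s')\big)-D(\hpi_{0,s}\|\pi_s)\Big],\]
and substituting the $Q$-recursion just established collapses the bracket to $\sum_a\hpi_0(a|s)Q^\pi(s,a)-D(\hpi_{0,s}\|\pi_s)$, which is the first line of \cref{eq:optimistic-value-function-bellman}. The identity $V^\pi(s)=\sigma(\pi_s,-Q^\pi(s,\cdot))$ then follows immediately from \cref{thm:convex-risk-dual-representation} applied with $f=-Q^\pi(s,\cdot)$, since $-\bE_{\hpi}f=\sum_a\hpi(a|s)Q^\pi(s,a)$ and the penalty associated with $\sigma(\pi_s,\cdot)$ is exactly $D(\cdot\|\pi_s)$.

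Finally, stationarity of the optimal auxiliary policy drops out of the recursion: the one-step problem $\max_{\hpi_s}\sum_a\hpi(a|s)Q^\pi(s,a)-D(\hpi_s\|\pi_s)$ depends on $s$ only through the stationary quantities $Q^\pi(s,\cdot)$ and $\pi_s$, so its maximizer varies with the current state but not with the time index, and using this same state-dependent rule at every step realizes the supremum in \cref{eq:def-optimistic-value-function}, giving the claimed stationary $\hpi$ of \cref{eq:def-hpi}. I expect the main obstacle to be rigorously justifying the dynamic-programming interchange in the $Q$-step, namely pushing $\max_{\{\hpi_t\}_{t\ge1}}$ inside $\bE_{s_1}$ and identifying it with $V^\pi(s_1)$; this rests on the additive-over-time, state-local structure of the penalized objective, on finiteness of the values (guaranteed by bounded rewards, $\gamma<1$, and the lower bound obtained from the feasible choice $\hpi=\pi$, for which $D=0$), and on attainment of each one-step $\max$ over the compact simplex $\Delta^{|\cA|}$, which holds whenever $D(\cdot\|\pi_s)$ is lower semicontinuous, as it is for the KL penalty of \cref{example:entropy-risk-measure}.
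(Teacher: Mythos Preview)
Your proposal is correct and follows essentially the same dynamic-programming decomposition as the paper's proof: split off the first time step, identify the tail with the companion function via time-homogeneity, and invoke \cref{thm:convex-risk-dual-representation} for the risk-measure identification. The only cosmetic difference is that you establish the $Q$-recursion first and then feed it into the $V$-recursion, whereas the paper proceeds in the opposite order (using the definition of $Q^\pi$ directly inside the $V$-step); you are also more explicit than the paper about justifying the interchange of $\max$ and $\bE_{s_1}$.
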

\begin{proof}
\vspace{-10pt}
 From the definition of~$V^\pi$ in \cref{eq:def-optimistic-value-function} and dual representation theorem we get
    \begin{align*}
       & \textstyle V^\pi\!(s) \!= \!\max_{\hpi_0} \bE_{a_0\sim\hpi_0} \Bigg[r(s_0,a_0) - D(\hpi_{0,s_0}||\pi_{s_0}) + \\
       &\textstyle\left. \max_{\{\hpi_t\}_{t\ge1}}\left[\!\bE_{s_t\!, a_t\!\sim\! \hpi}\!\sum_{t\!=\!1}^{+\!\infty}\!\gamma^t\! \left(r(s_t, \!a_t) \!-\!\! D(\hpi_{t, s_t}||\pi_{s_t})\right)\!\Big|s_0\!=\!s\!\right]\right]\\
        &\textstyle= \!\max_{\hpi_0} \bE_{a_0\sim\hpi_0} \left[Q^\pi(s_0,a_0) - D(\hpi_{0,s_0}||\pi_{s_0})\Big|s_0=s\right]\\
        &\textstyle=\max_{\hpi}\sum_a\hpi(a|s)Q^\pi(s,a) - D(\hpi_s||\pi_s) \\
      &\textstyle=\sigma(\pi_s, -Q^\pi(s,\cdot)).
    \end{align*}
    Further we know that the optimistic $\hpi$ is given by
    \begin{align*}
      \textstyle  \hpi(\cdot|s) = \argmax_{\hpi}\sum_a\hpi(a|s)Q^\pi(s,a) - D(\hpi_s||\pi_s).
    \end{align*}
    Similarly, from the definition of~$Q^\pi$ in \eqref{eq:def-optimistic-value-function} we get that
\begin{align*}
    &Q^\pi(s,a) = r(s,a) + \\
    &\max_{\{\hpi_t\!\}_{t\!\ge\!1}}\! \!\left[\!\bE_{s_t, a_t\sim \hpi}\!\sum_{t=1}^{+\infty}\!\!\gamma^t\! \left(r(s_t, a_t) \!-\!\! D(\hpi_{t, s_t}||\pi_{s_t})\right)\!\Big|s_0\!=\!s, \!a_0\!=\!a\!\right]\\
    &= r(s,a) + \\
    & \bE_{s'\!\sim \!P(\!\cdot|s,a)}\!\max_{\{\!\hpi_t\!\}_{t\!\ge\!1}}\! \!\left[\!\bE_{s_t, \!a_t\!\sim\! \hpi}\!\sum_{t=1}^{+\infty}\!\!\gamma^t\! \left(r(\!s_t, a_t) \!-\!\! D(\hpi_{t, s_t}||\pi_{s_t})\right)\!\Big|s_1\!=\!s'\!\right]\\
   &= r(s,a) + \gamma\sum_{s'}P(s'|s,a) V^\pi(s'),
\end{align*}
which completes the proof.
\end{proof}

We are now ready to state the policy gradient theorem for the optimistic value function as follows:
\begin{theorem}[Policy gradient theorem for optimistic value function]\label{theorem:policy-gradient-optimistic-value-function}
For a parametrized policy~$\pi^\theta$:
    \begin{align*}
      &\textstyle   \nabla_\theta V^{\pi^\theta}(s_0) = \\
      &\textstyle\frac{1}{1-\gamma}\sum_{s} d^{\hpi^\theta}_{s_0}(s) \sum_{a} \frac{\partial \sigma(\pi_s^\theta, -Q^{\pi^\theta}(s,\cdot))}{\partial \pi^\theta(a|s)}\pi^\theta(a|s)\nabla_\theta\log\pi^\theta(a|s),
    \end{align*}
    where~$ d^{\hpi^\theta}_{s_0}$ is the discounted visitation distribution under the optimistic policy~$\hat{\pi}^\theta$ defined in \eqref{eq:def-hpi}.
    In the special case of the entropic risk measure
    \begin{equation*}
        \sigma(\pi_s, -Q(s,\cdot)) =  \beta^{-1}\log\bE_{a\sim \pi_s}e^{\beta Q(s,a)},
    \end{equation*}
    the policy gradient simplifies to
    \begin{align*}
        &\textstyle \nabla_\theta V^{\pi^\theta}(s_0)=\\&\qquad
\textstyle        \frac{1}{\beta(1-\gamma)}\sum_{s,a} d^{\hpi^\theta}_{s_0}(s) e^{\beta A^{\pi^\theta}(s,a)}\pi^\theta(a|s)\nabla_\theta\log\pi^\theta(a|s),
    \end{align*}
    with~$\hpi^\theta(a|s) \propto \pi^\theta(a|s) e^{\beta Q^{\pi^\theta}(s,a)}$.
\end{theorem}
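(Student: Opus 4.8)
The plan is to differentiate the Bellman identity $V^{\pi^\theta}(s)=\sigma\bigl(\pi^\theta_s,-Q^{\pi^\theta}(s,\cdot)\bigr)$ from \cref{lemma:Bellman} and unroll the resulting recursion, mirroring the classical policy-gradient argument but with the optimistic twist that the kernel propagating the gradient is the optimistic policy $\hpi^\theta$ rather than $\pi^\theta$ itself. The first point to exploit is that $\sigma$ depends on $\theta$ through \emph{both} of its arguments: explicitly through the baseline $\pi^\theta_s$, and through the function $-Q^{\pi^\theta}(s,\cdot)$, which itself carries $\theta$. Applying the chain rule therefore splits $\nabla_\theta V^{\pi^\theta}(s)$ into a first-argument term and a second-argument term.

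To evaluate the two partials of $\sigma$ I would invoke the envelope theorem on its dual representation $\sigma(\pi_s,f)=\max_{\hpi}\bigl(-\bE_{\hpi}f-D(\hpi_s||\pi_s)\bigr)$ from \cref{thm:convex-risk-dual-representation}; since the feasible set is the fixed simplex $\Delta^{|\cA|}$, only the explicit dependence survives and the variation through the maximizer contributes nothing. This yields $\partial\sigma(\pi_s,f)/\partial f(a)=-\hpi(a|s)$ with $\hpi$ the optimizer appearing in \cref{lemma:Bellman}, so the second-argument term collapses to $\sum_a\hpi^\theta(a|s)\nabla_\theta Q^{\pi^\theta}(s,a)$. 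Abbreviating the first-argument (instantaneous score) term as $g(s):=\sum_a \tfrac{\partial\sigma}{\partial\pi^\theta(a|s)}\,\pi^\theta(a|s)\nabla_\theta\log\pi^\theta(a|s)$ and substituting the Bellman relation $\nabla_\theta Q^{\pi^\theta}(s,a)=\gamma\sum_{s'}P(s'|s,a)\nabla_\theta V^{\pi^\theta}(s')$ (the reward is $\theta$-free) produces the recursion
\[
\nabla_\theta V^{\pi^\theta}(s)=g(s)+\gamma\sum_a\hpi^\theta(a|s)\sum_{s'}P(s'|s,a)\,\nabla_\theta V^{\pi^\theta}(s').
\]
Iterating this identity and recognizing the composite kernel $\sum_a\hpi^\theta(a|s)P(s'|s,a)$ as the state-transition law under $\hpi^\theta$, the geometric sum collapses into the discounted visitation distribution $d^{\hpi^\theta}_{s_0}$, giving the claimed prefactor $\tfrac{1}{1-\gamma}\sum_s d^{\hpi^\theta}_{s_0}(s)\,g(s)$.

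For the entropic specialization it then remains to compute $\partial\sigma/\partial\pi(a|s)$ in closed form. Differentiating $\sigma(\pi_s,-Q)=\beta^{-1}\log\sum_{a'}\pi(a'|s)e^{\beta Q(s,a')}$ gives $\partial\sigma/\partial\pi(a|s)=\beta^{-1}e^{\beta Q(s,a)}/\sum_{a'}\pi(a'|s)e^{\beta Q(s,a')}$; using $e^{\beta V^{\pi}(s)}=\sum_{a'}\pi(a'|s)e^{\beta Q(s,a')}$ together with $A^{\pi}=Q^{\pi}-V^{\pi}$, this reduces to $\beta^{-1}e^{\beta A^{\pi}(s,a)}$. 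Multiplying by $\pi(a|s)$ recovers both the weights $\beta^{-1}e^{\beta A^{\pi}(s,a)}\pi(a|s)$ in the stated formula and, after normalization, the optimistic policy $\hpi^\theta(a|s)\propto\pi^\theta(a|s)e^{\beta Q^{\pi^\theta}(s,a)}$.

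The step I expect to be the main obstacle is the careful invocation of the envelope theorem: I must justify that differentiating the inner maximum commutes with freezing the optimizer $\hpi^\theta$, which requires first-order stationarity on the simplex—handling the normalization constraint through a multiplier that cancels because $\sum_a\nabla_\theta\hpi^\theta(a|s)=0$—or, alternatively, a uniqueness/smoothness hypothesis on the maximizer ensuring $V^{\pi^\theta}$ is differentiable. A secondary piece of bookkeeping is that the recursion couples $\nabla_\theta V^{\pi^\theta}(s)$ across all states simultaneously, so the identification of the iterated kernel with $d^{\hpi^\theta}_{s_0}$ must be carried out for the whole vector of state-gradients at once rather than state by state.
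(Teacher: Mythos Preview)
Your proposal is correct and follows essentially the same route as the paper: differentiate the Bellman identity, split via the chain rule into the $\pi^\theta$-argument term and the $Q^{\pi^\theta}$-argument term, identify $\partial\sigma/\partial Q^{\pi^\theta}(s,a)=\hpi^\theta(a|s)$, substitute $\nabla_\theta Q^{\pi^\theta}=\gamma P\nabla_\theta V^{\pi^\theta}$, unroll the recursion into $d^{\hpi^\theta}_{s_0}$, and then specialize to the entropic case by the same direct computation of $\partial\sigma/\partial\pi(a|s)=\beta^{-1}e^{\beta A^{\pi^\theta}(s,a)}$. The only cosmetic difference is that the paper obtains the identity $\partial\sigma/\partial Q^{\pi^\theta}(s,a)=\hpi^\theta(a|s)$ by citing an external lemma, whereas you derive it yourself via the envelope theorem on the dual representation---your treatment is in fact more self-contained on this point.
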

The proof of \cref{theorem:policy-gradient-optimistic-value-function} (deferred to the Appendix) is done by taking the gradient of $\theta$ on both sides of the Bellman equation \eqref{eq:optimistic-value-function-bellman}. 
\begin{rmk}[Interpretation]
The Bellman equation above mirrors the risk-neutral case but replaces the baseline expectation under~$\pi$ with a risk-sensitive evaluation under an auxiliary distribution~$\hat{\pi}$.
This makes the value of~$\pi$ \emph{optimistic}-tilted toward actions or trajectories that could yield higher returns, while still penalizing large deviations from~$\pi$.
The policy-gradient theorem retains the same structural form as the classical result, but introduces three important differences. 
First, the visitation distribution is taken under~$\hat{\pi}$ rather than~$\pi$, emphasizing more favorable trajectories.
Second, action weights are exponentially scaled by~$e^{\beta A^\pi(s,a)}$, amplifying high-advantage actions and inducing a principled risk-seeking bias.
Third, as~$\beta\to 0$, we recover the standard policy-gradient theorem:~$\hat{\pi}\to \pi$ and the exponential factor reduces to the advantage~$A^\pi(s,a)$.
Together, these results establish a rigorous connection between optimism and risk-seeking convex risk measures, providing the foundation for optimistic policy-gradient algorithms.
\end{rmk}
\section{Optimistic update for multi-agent learning}
In this section, we leverage the policy gradient theorem (\cref{theorem:policy-gradient-optimistic-value-function}) to design optimistic RL algorithms for decentralized multi-agent settings.
Throughout this section, we assume direct parametrization \eqref{eq:direct-parameterization} of the agents' policies, and we focus on the case where the penalty function is the KL divergence, i.e.,~$D(\hpi||\pi) = \beta^{-1}\KL(\hpi||\pi)$, so that the associated risk measure~$\sigma$ is the entropic risk.
Before presenting the update rules, we introduce two key quantities.
For each agent~$i$, define the \emph{averaged optimistic~$Q$-function} and the \emph{averaged optimistic advantage function} as
\begin{align}
\textstyle \oQ_i^{\pi^\theta}(s,a_i) = \beta^{-1}\bE_{a_{-i}\sim \pi_{-i}^\theta(a_{-i}|s)} e^{\beta Q^{\pi^\theta}(s,a_i,a_{-i})}\label{eq:def-oQ},\\
      \vspace{-10pt}
  \textstyle \oA_i^{\pi^\theta}(s,a_i) = \beta^{-1}\bE_{a_{-i}\sim \pi_{-i}^\theta(a_{-i}|s)} e^{\beta A^{\pi^\theta}(s,a_i,a_{-i})}\label{eq:def-oA}
    \end{align}
\begin{lemma}[Multi-agent optimistic policy gradient]
\label{lemma:multi-agent-PG}
Consider decentralized policy \eqref{eq:decentralized-policy} with direct parameterization \eqref{eq:direct-parameterization}. 
We have that
\begin{align}\label{eq:gradient-decentralized-direct}
\textstyle  \frac{\partial V^{\pi^\theta}(s)}{\partial \theta_{s,a_i}} =\frac{1}{1-\gamma} d^{\hpi^\theta}(s) \oA_i^{\pi^\theta}(s,a_i),
    \end{align}
\end{lemma}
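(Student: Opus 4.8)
The plan is to specialize the entropic-case gradient from \cref{theorem:policy-gradient-optimistic-value-function} to the product policy \eqref{eq:decentralized-policy} under direct parametrization \eqref{eq:direct-parameterization}, and then extract the single coordinate $\partial/\partial\theta_{s,a_i}$. Writing the summation state as $s'$ and suppressing the initial-state subscript on the visitation distribution (as in the lemma statement), that theorem reads
\[
\frac{\partial V^{\pi^\theta}(s)}{\partial\theta_{s,a_i}} = \frac{1}{\beta(1-\gamma)}\sum_{s',a} d^{\hpi^\theta}(s')\,e^{\beta A^{\pi^\theta}(s',a)}\,\pi^\theta(a|s')\,\frac{\partial\log\pi^\theta(a|s')}{\partial\theta_{s,a_i}},
\]
so the whole argument reduces to evaluating the log-derivative factor and recognizing the sum that remains.

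First I would use the product form $\log\pi^\theta(a|s')=\sum_{j=1}^n\log\pi_j^\theta(a_j|s')$ together with the identity $\theta_{s,a_i}=\pi_i^\theta(a_i|s)$. Since under direct parametrization the coordinates $\theta_{s,a_i}$ are treated as free variables, differentiation annihilates every summand except the one with $j=i$ evaluated at $s'=s$, and within that summand only joint actions whose $i$-th component equals $a_i$ contribute. This yields the sparse identity
\[
\frac{\partial\log\pi^\theta(a|s')}{\partial\theta_{s,a_i}} = \mathbf{1}[s'=s]\,\frac{\mathbf{1}[a_{(i)}=a_i]}{\pi_i^\theta(a_i|s)},
\]
where $a_{(i)}$ denotes the $i$-th component of the joint action $a$. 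Substituting collapses the double sum over $(s',a)$ to a single sum over the other agents' actions $a_{-i}$ at the fixed state $s$, with the visitation weight $d^{\hpi^\theta}(s)$ pulled outside.

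The remaining step is a cancellation. On the surviving terms $\pi^\theta(a|s)=\pi_i^\theta(a_i|s)\,\pi_{-i}^\theta(a_{-i}|s)$, so the factor $\pi_i^\theta(a_i|s)$ exactly cancels the denominator produced by the log-derivative, leaving $\pi_{-i}^\theta(a_{-i}|s)$ as the weight. What is left is $\sum_{a_{-i}}\pi_{-i}^\theta(a_{-i}|s)\,e^{\beta A^{\pi^\theta}(s,a_i,a_{-i})}=\bE_{a_{-i}\sim\pi_{-i}^\theta}\,e^{\beta A^{\pi^\theta}(s,a_i,a_{-i})}$, which is precisely $\beta\,\oA_i^{\pi^\theta}(s,a_i)$ by \eqref{eq:def-oA}. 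The $\beta$ generated here cancels the $1/\beta$ prefactor, producing $\tfrac{1}{1-\gamma}\,d^{\hpi^\theta}(s)\,\oA_i^{\pi^\theta}(s,a_i)$, as claimed.

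I expect the only delicate point to be the bookkeeping of this log-derivative: recognizing that it is a \emph{double} indicator pinning both the state $s'=s$ and the $i$-th action coordinate $a_{(i)}=a_i$, and that the $1/\pi_i^\theta(a_i|s)$ it introduces is exactly what converts the joint expectation over $a$ into the marginal expectation over $a_{-i}$ that defines $\oA_i^{\pi^\theta}$. Once that identity is in place, the $\beta$-cancellation and the emergence of the averaged advantage are immediate, and everything else is routine algebra.
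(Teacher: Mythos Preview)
Your proposal is correct and follows exactly the route the paper takes: both simply specialize the entropic-case formula of \cref{theorem:policy-gradient-optimistic-value-function} to the product policy with direct parametrization. The paper's own proof is a one-line ``direct corollary'' remark, whereas you spell out the log-derivative indicator, the cancellation of $\pi_i^\theta(a_i|s)$, and the $\beta$-cancellation that yields $\oA_i^{\pi^\theta}$, but the underlying argument is identical.
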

\begin{proof}
    This is a direct corollary of the general \cref{theorem:policy-gradient-optimistic-value-function}, obtained by substituting~$\pi^\theta$ to decentralized policy \eqref{eq:decentralized-policy} with direct parameterization \eqref{eq:direct-parameterization}.
\end{proof}

\begin{rmk}[Interpretation]
The quantities~$\oQ_i^{\pi^\theta}$~$\oA_i^{\pi^\theta}$ summarize agent~$i$'s \emph{optimistic evaluation} of its actions, averaged over the behavior of the other agents under the baseline policy~$\pi_{-i}^\theta$.
Intuitively,~$\oQ_i^{\pi^\theta}(s,a_i)$ measures the risk-seeking value of choosing~$a_i$, while~$\oA_i^{\pi^\theta}(s,a_i)$ captures the deviation of that choice from the state value.
Both tilt the evaluation toward joint actions that promise higher returns, thus embodying the cooperative bias induced by optimism\footnote{  Note that in this paper we consider the identical reward setting, and thus in this formulation all agents evaluate actions under the same reward landscape, their optimistic tendencies become naturally aligned, promoting coordinated exploration rather than miscoordination. Moreover, because our policy updates are incremental, optimism gradually guides agents toward consistent high-reward policies, further reducing the likelihood of misaligned behavior.}.

{Interpreting the optimistic auxiliary joint policy $\hpi$ in the multi-agent setting is more nuanced than in the single-agent case. Ideally, in the multi-agent setting, one would like to express $\hpi$ as a factorized product of per-agent optimistic policies, i.e., $\hpi(\cdot|s) = \prod_{i=1}^n\hpi_i(\cdot|s)$, and the corresponding optimistic value function would be defined as
\begin{small}
    \begin{align*}
    V^{\pi}\!(s) \!=\!\!\!\!\!\! \max_{\hpi \!=\! (\!\hpi_{1
}\!, \dots,\! \hpi_n\!)} \!\!\left[\!\bE_{s_t, a_t\sim \hpi}\!\!\sum_{t=0}^{+\infty}\!\!\gamma^t\!\! \left(\!r(s_t, a_t) \!-\!\! \sum_i\!\!D(\hpi_{i, s_t}||\pi_{i,s_t}\!)\!\!\right)\!\!\Big|s_0\!=\!s\!\right]\!\!.\notag
\end{align*}
\end{small}
However, this decentralized formulation is generally intractable to optimize. To retain computational feasibility, we relax the constraint that $\hat{\pi}$ must factorize across agents, allowing it to represent a joint auxiliary policy. This relaxation preserves scalability and still captures the cooperative optimism effect we aim to model.} 
\end{rmk}

\begin{rmk}[Practical approximation]
In practice, computing~$d^{\hat\pi}$ and~$\oA_i^\pi$ exactly is intractable.
A common surrogate is to replace~$d^{\hpi}(s)\oA_i^\pi(s,a)$ with~$\oQ_i^\pi(s,a)$ in the update, yielding a gradient-like rule that is easier to estimate from data.
Specifically, we consider the following updates for agent~$i$.
First, the \emph{policy-gradient style update}:
\begin{align}\label{eq:optimistic-Q-PG}
\textstyle \theta_{i,s}^{(t+1)} = \Proj_{\Delta^{\cA_i}} \Big(\theta_{i,s}^{(t)} + \frac{\eta}{1-\gamma}\oQ_i^{\pi^\theta}(s,\cdot)\Big),
\end{align}
Second, the \emph{policy-iteration/Q-learning style update}:
\begin{align}\label{eq:optimistic-Q-learning}
\textstyle \pi_i(s) = \arg\max_{a_i} \oQ_i^{\pi^\theta}(s,a_i).
\end{align}
Both formulations are consistent with the optimistic policy-gradient principle and share the same stationary points (since $d^{\hat\pi}(s)\oA_i^\pi(s,a) = \tfrac{d^{\hat\pi}(s)}{e^{\beta V^\pi(s)}} \oQ_i^\pi(s,a)$).
Thus, the resulting updates can be interpreted as a form of scaled gradient ascent, where the exact policy-gradient direction is preserved up to a positive multiplicative constant, making them more practical to implement in decentralized settings.
\end{rmk}

\vspace{-10pt}
\subsection{Decentralized Learning Algorithm}
Building on \cref{lemma:multi-agent-PG}, we now design a sample-based decentralized optimistic RL algorithm.
From the update rules in~\eqref{eq:optimistic-Q-PG}-\eqref{eq:optimistic-Q-learning}, the main challenge lies in accurately estimating the averaged optimistic~$Q$-functions~$\oQ_i^\pi(s,a_i)$.
The following lemma introduces an auxiliary variable that enables a tractable Bellman-style recursion.

\begin{lemma}[Decentralized Bellman equation]
\label{lemma:decentralized-learning}
Define an auxiliary variable~$Z^{\pi}(s):= e^{\beta V^{\pi}(s)}$.
If transitions are deterministic, i.e.,~$P(s'|s,a) = \mathbf{1}\{s' = f(s,a)\}$, then the averaged optimistic quantities satisfy:
    \begin{align}
        \oQ_i^{\pi}(s,a_i) &= \beta^{-1}\bE_{a_{-i}\sim\pi_{-i}} e^{\beta r(s,a_i,a_{-i})} \left(Z^{\pi}(f(s,a_i,a_{-i}))\right)^{\gamma}\notag,\\
        Z^{\pi}(s) &= \bE_{a_i\sim\pi_i^\theta} \oQ_i^{\pi}(s,a_i).\label{eq:decentralized-Bellman}
    \end{align}
\end{lemma}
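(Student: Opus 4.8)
The plan is to obtain both identities directly from the entropic-risk Bellman equation of \cref{lemma:Bellman}, specialized to deterministic transitions, after changing coordinates to $Z^\pi = e^{\beta V^\pi}$. The guiding observation is that under the entropic risk measure the log-sum-exp defining $V^\pi$ collapses to a plain expectation once exponentiated, and that deterministic dynamics let the discounted continuation term factor as a single power of $Z^\pi$.

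For the first identity, I would begin from the definition \eqref{eq:def-oQ} of $\oQ_i^\pi$ and insert the $Q$-recursion from \eqref{eq:optimistic-value-function-bellman}. Under $P(s'|s,a)=\mathbf 1\{s'=f(s,a)\}$ this reads $Q^\pi(s,a_i,a_{-i}) = r(s,a_i,a_{-i}) + \gamma V^\pi(f(s,a_i,a_{-i}))$, so that
\begin{equation*}
e^{\beta Q^\pi(s,a_i,a_{-i})} = e^{\beta r(s,a_i,a_{-i})}\bigl(e^{\beta V^\pi(f(s,a_i,a_{-i}))}\bigr)^{\gamma} = e^{\beta r(s,a_i,a_{-i})}\bigl(Z^\pi(f(s,a_i,a_{-i}))\bigr)^{\gamma}.
\end{equation*}
Applying $\beta^{-1}\bE_{a_{-i}\sim\pi_{-i}}[\,\cdot\,]$ to both sides reproduces exactly the claimed formula for $\oQ_i^\pi(s,a_i)$.

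For the second identity, I would take the value part of the Bellman equation in its entropic form $V^\pi(s) = \beta^{-1}\log\bE_{a\sim\pi_s}e^{\beta Q^\pi(s,a)}$, so that exponentiating gives $Z^\pi(s) = \bE_{a\sim\pi_s}e^{\beta Q^\pi(s,a)}$. Since the baseline policy factorizes as $\pi_s(a)=\pi_i(a_i|s)\,\pi_{-i}(a_{-i}|s)$, I would split this joint expectation into an outer expectation over $a_i\sim\pi_i$ and an inner expectation over $a_{-i}\sim\pi_{-i}$. The inner expectation is precisely the expression defining $\oQ_i^\pi(s,a_i)$ in \eqref{eq:def-oQ} (up to its $\beta^{-1}$ prefactor), so the outer expectation over $a_i$ closes the recursion between $Z^\pi$ and the averaged optimistic $Q$-functions, yielding the second line of \eqref{eq:decentralized-Bellman}.

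The algebra is elementary, so the only genuinely delicate point is isolating the role of determinism. With stochastic dynamics the continuation term is $\gamma\sum_{s'}P(s'|s,a)V^\pi(s')$, and exponentiating produces a weighted geometric mean $\prod_{s'}\bigl(Z^\pi(s')\bigr)^{\gamma P(s'|s,a)}$ rather than a single power of $Z^\pi$, which is exactly what would break the clean scalar recursion. I therefore expect the main care in the writeup to lie in making explicit that the deterministic hypothesis is what turns $e^{\beta\gamma V^\pi(f(s,a))}$ into $\bigl(Z^\pi(f(s,a))\bigr)^{\gamma}$ with no Jensen-type gap, and in verifying that the product structure of the decentralized policy is what licenses the nested-expectation step.
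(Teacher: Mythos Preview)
Your proposal is correct and follows essentially the same route as the paper's proof: start from the definition of $\oQ_i^\pi$, substitute the deterministic $Q$-recursion $Q^\pi(s,a)=r(s,a)+\gamma V^\pi(f(s,a))$, exponentiate, and identify $Z^\pi$; then for the second identity exponentiate the entropic Bellman equation for $V^\pi$ and split the joint expectation over $a=(a_i,a_{-i})$ using the product structure of $\pi$. Your additional remark on why determinism is essential (the geometric-mean obstruction under stochastic transitions) is a nice clarification the paper does not spell out; note also that the $\beta^{-1}$ prefactor you flag indeed leaves a stray $\beta$ in the second identity as stated, an artifact present in the paper as well.
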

\begin{proof}
    From the definition of~$\oQ_i^{\pi}(s,a_i)$ we have:
    \begin{align*}
        \oQ_i^{\pi}(s,a_i) 
        &= \beta^{-1}\bE_{a_{-i}\sim \pi_{-i}(a_{-i}|s)} e^{\beta \left(r(s,a)+\gamma V^{\pi}(f(s,a))\right)}\\
        &=\beta^{-1}\bE_{a_{-i}\sim \pi_{-i}(a_{-i}|s)} e^{\beta \left(r(s,a)+\gamma\beta^{-1}\log(Z^{\pi}(f(s,a)))\right)}\\
        &=\beta^{-1}\bE_{a_{-i}\sim\pi_{-i}} e^{\beta r(s,a_i,a_{-i})} \left(Z^{\pi}(f(s,a_i,a_{-i}))\right)^{\gamma}.
    \end{align*}
    Further, from the definition of~$Z^{\pi}(s)$ we have:
    \begin{align*}
        Z^{\pi}(s) &= e^{\beta V^{\pi}(s)} = e^{\beta \beta^{-1}\log(\bE_{a\sim\pi} e^{\beta Q(s,a)})}\\
        &= \bE_{a_i\sim\pi^\theta} \oQ_i^{\pi}(s,a_i),
    \end{align*}
    which completes the proof.
\end{proof}

\begin{rmk}[Interpretation]
\cref{lemma:decentralized-learning} shows that the auxiliary variable~$Z^\pi(s)$ acts as an exponential proxy for the state value.
This avoids computing expectations over the full joint space directly, since each agent can evaluate~$\oQ_i^\pi(s,a_i)$ by averaging only over the actions of the other agents.
In practice, the expectations are replaced with empirical averages from sampled trajectories, leading to a practical decentralized policy-evaluation procedure.
This is summarized in \cref{alg:optimistic-q-multi}, that iteratively updates both $\oQ_i^\pi(s,a_i)$ and $Z^\pi(s)$.
Once we have reliable estimates of~$\oQ_i^\pi(s,a_i)$ from this evaluation step, we can improve the agent’s policy using either policy gradient or policy iteration. 
\cref{alg:optimistic-RL} formalizes this process.
\end{rmk}


\begin{algorithm}[h]
\caption{\\Agent $i$'s Decentralized Optimistic Policy Evaluation}
\label{alg:optimistic-q-multi}
\begin{algorithmic}[1]
\Require Discount $\gamma\in(0,1)$, stepsize sequence $\{\alpha_t\}$, optimistic parameter $\beta>0$, horizon $T_Q$, and agent $i$'s policy $\pi_i$
\State Initialize $Z^{\pi}(s), \oQ_i^{\pi}(s,a)$ for all $s,a$, \Statex \qquad {e.g., $Z(s)\leftarrow 1, \oQ_i^{\pi}(s,a)\leftarrow0$}
    \State Sample initial state $s_0{\sim \rho}$
    \For{$t=0$ to $T_Q-1$} 
        \State Each agent $i$ sample action $a_{i} \sim \pi_i(\cdot\mid s_t)$, then execute $a_{i,t}$, observe reward $r_t$ and next state $s_{t+1}$ 
            \State \textbf{Q-update:}
            \[
              \textstyle  \oQ_i^{\pi}(s_t,a_{i,t})
                \gets (1-\alpha_t)\,\oQ_i^{\pi}(s_t,a_{i,t})
                + \!\frac{\alpha_t}{\beta}\, e^{\beta r_t}\,\Big(\!Z^{\pi}(s_{t+1})\!\Big)^{\!\gamma}
            \]
            \State \textbf{Z-update:}
            \[
                Z^{\pi}(s_t)
                \gets (1-\alpha_t)\, Z^{\pi}(s_t)
                + \alpha_t\, \oQ_i^{\pi}(s_t,a_{i,t})
            \]
\EndFor
\end{algorithmic}
\end{algorithm}

\begin{algorithm}[h]
\caption{\\Agent $i$'s Decentralized Optimistic Policy Update}
\label{alg:optimistic-RL}
\begin{algorithmic}[1]
\Require Initial Policy $\pi = \prod_{i=1}^n \pi_i$
    \For{$t=0$ to $T-1$ and each agent $i$} 
        \State Estimate $\oQ_i^\pi(s,a_i)$ using optimistic policy evaluation (Algorithm \ref{alg:optimistic-q-multi})
            \State \textbf{Policy Gradient:}       
\begin{talign*}
    \theta_{i,s}^{(t+1)} &= \Proj_{\Delta^{\cA_i}} \left(\theta_{i,s}^{(t)} + \frac{\eta}{1-\gamma}\oQ_i^{\pi}(s,\cdot)\right), ~~~\pi\leftarrow \pi^{\theta^{(t+1)}}
\end{talign*}
~~or \textbf{Policy Iteration / Q-learning:}
\begin{talign*}
    \pi_i(s) &\leftarrow \argmax_{a_i} \oQ_i^{\pi}(s,a_i)
\end{talign*}
\EndFor
\end{algorithmic}
\end{algorithm}
\vspace{-20pt}
\section{Numerical Result} 
\vspace{-5pt}
We now provide empirical evidence for the benefits of optimistic updates.
Our experiments include a simple gridworld testbed, which illustrates the effect of optimism in escaping suboptimal equilibria, and a cooperative continuous-control benchmark, which evaluates performance in a more realistic multi-agent setting.

\subsection{Full-Information Setting}
We first evaluate the modified optimistic gradient update \eqref{eq:optimistic-Q-PG} under the full-information setting, where $\oQ_i^\pi$ is computed exactly, on a simple two-agent gridworld.
Each agent chooses actions~$a_x,a_y\in \{-1,0,1\}$, and the joint state~$s=(x,y)$ evolves as~$x' = \mod(x + a_x, 4)$,~$y' = \mod(y + a_y, 4)$.
At each position, agents receive reward~$R(x,y)$ as shown in \cref{tab:reward-numerics} (left).
The global optimal policy is for agents to transit to~{$(2,2)$} as fast as possible.
However, staying at location~$(4,4)$ represents a suboptimal Nash equilibrium. { Note that for the optimal location $(2,2)$, when the other agent’s action is uncertain or noisy—e.g., it may deviate with some probability to location 1 or 3—they incur a penalty of $-10$. In contrast, remaining at the suboptimal Nash equilibrium $(4,4)$ is less sensitive to deviations by the other agent. Consequently, risk-neutral algorithms often concentrate on this suboptimal equilibrium, a phenomenon known as relative overgeneralization (RO). The optimistic update mitigates this issue by biasing agents toward higher-reward outcomes, enabling better convergence to the globally optimal equilibrium, even when it involves higher cooperative risk. This is validated in \cref{fig:neutral-alg} and \cref{fig:optimistic-alg}, which show the normalized state-visitation distributions~$d^{\pi}$ (where~$\pi$ is the policy found by the algorithm) under the learned policies. \footnote{We would like to clarify that Figure 2 is \emph{not} a special case from a particular trial, but rather depicts the heatmap of the \emph{ground-truth} state visitation distribution $d^{\pi}$. Hence, it represents the overall statistical distribution induced by the learned policy, not the outcome of a single run.}
 The risk-neutral algorithm tends to concentrate mass on the suboptimal equilibrium~$(4,4)$.
By contrast, the optimistic update assigns most probability to the true optimum~{$(2,2)$}.}



\begin{figure}[tb]
  \centering
  \begin{minipage}{0.35\linewidth} 
\begin{small}
    \centering
    $\displaystyle
    \begin{bmatrix}
      -10 & -10 & -10 & 0 \\
      -10 & 10 & -10 & 0 \\
      -10 & -10 & 0 & 0 \\
      0 & 0 & 0 & 5
    \end{bmatrix}$
    \caption{Reward table $R$}
    \label{tab:reward-numerics}
 \end{small}
  \end{minipage}%
  \hfill
  \begin{minipage}{0.3\linewidth}
    \centering
    \includegraphics[width=\linewidth]{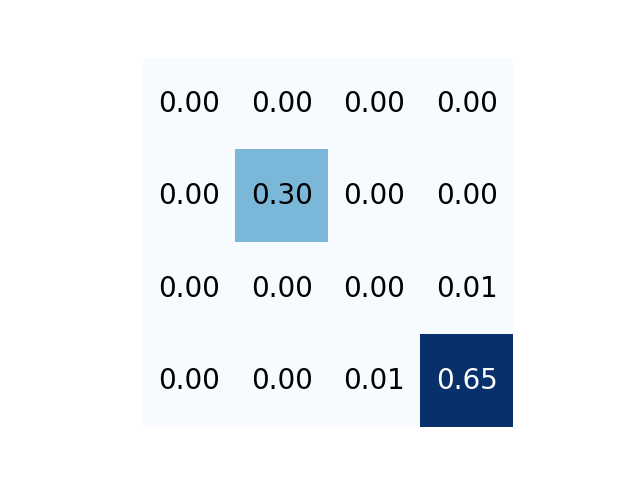}
    \caption{Risk-neutral}
    \label{fig:neutral-alg}
  \end{minipage}%
  \hfill
  \begin{minipage}{0.3\linewidth}
    \centering
    \includegraphics[width=\linewidth]{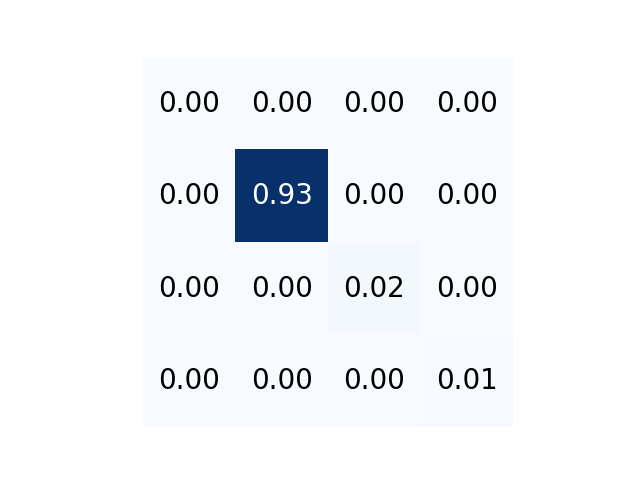}
    \caption{Optimistic}
    \label{fig:optimistic-alg}
  \end{minipage}
\end{figure}

\vspace{-5pt}
\subsection{Sample-based Setting}
\begin{figure}[htbp]
    \centering
    \includegraphics[width=0.7\linewidth]{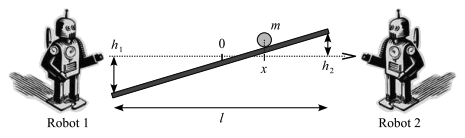}
    \centering
    \caption{\centering Cooperative Ball Balancing \cite{matignon2007hysteretic}}
    \label{fig:ball-balancing}
\end{figure}
We next evaluate the decentralized sample-based algorithm on the cooperative ball-balancing task (\cref{fig:ball-balancing}), where two agents must keep a ball centered on a flat table they jointly control.
This benchmark is known to require coordination and has been widely used to test cooperative MARL methods~\cite{matignon2007hysteretic,Taniguchi05Adaptive}.
\cref{tab:comparison-results} compares final returns across 10 random seeds.
The optimistic algorithms consistently outperform both decentralized~$Q$-learning and hysteretic~$Q$-learning.
The best-performing setting ($\beta=0.04$) achieves the highest mean return (169.2) with the lowest variance.
Performance is robust to moderate changes in~$\beta$, showing that optimism need not be finely tuned to be effective.
Hysteretic~$Q$-learning performs competitively but slightly worse, while decentralized~$Q$-learning exhibits higher variance and lower mean performance.

These experiments highlight the advantage of optimistic updates: they not only steer agents away from suboptimal equilibria (gridworld), but also improve both \emph{efficiency} and \emph{stability} in challenging cooperative tasks (ball balancing).

\begin{table}[tb]
\centering
\vspace{10pt}
\caption{Final performance comparison (Mean $\pm$ Std).}
\label{tab:comparison-results}
\begin{tabular}{lc}
\toprule
\textbf{Method} & \textbf{Mean $\pm$ Std} \\
\midrule
\textbf{Optimistic $\mathbf{\boldsymbol{\beta=}0.04}$}  & \textbf{169.218 $\pm$ 1.636} \\
Optimistic $\beta=0.01$  & 168.579 $\pm$ 2.013 \\
Optimistic $\beta=0.003$ & 168.079 $\pm$ 2.460 \\
Hysteretic Q-Learning \cite{matignon2007hysteretic} & 167.921 $\pm$ 1.994 \\
Decentralized Q-learning            & 167.123 $\pm$ 4.131 \\
\bottomrule
\end{tabular}
\end{table}


\begin{figure}[htbp]
    \vspace{-5pt}
    \centering
    \includegraphics[width=0.6\linewidth]{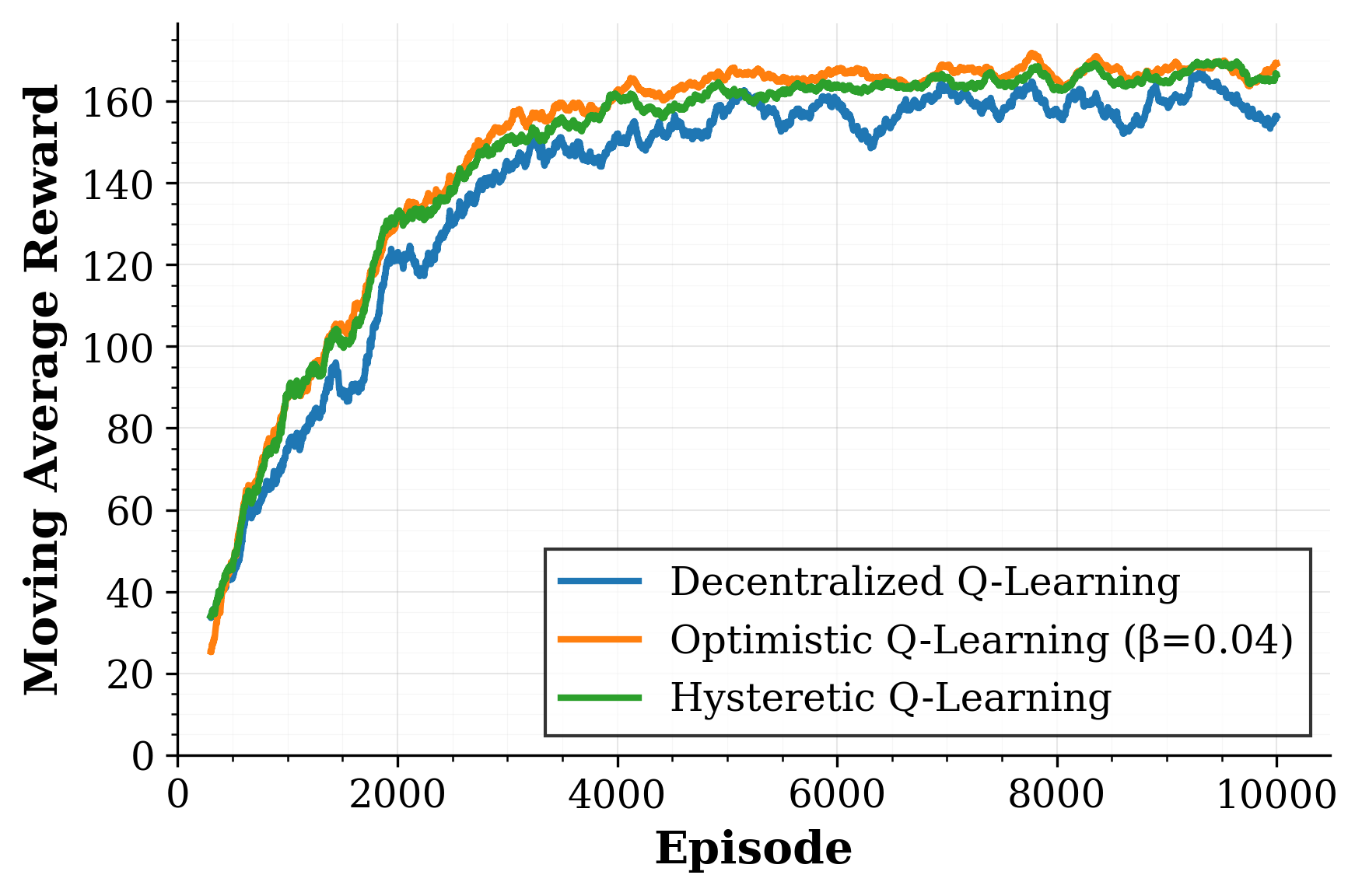}
    \caption{Learning curves for different algorithms.}
    \label{fig:curve_ball_balancing}
\vspace{-5pt}
\end{figure}

\section{Conclusion}
In this work, we revisited the role of optimism in MARL through a risk-sensitive lens.
By introducing \emph{optimistic value functions}, we provided a rigorous connection between  risk measures and optimism, framing optimism as a principled form of controlled risk-seeking.
This perspective led to a policy-gradient theorem for optimistic objectives and to decentralized actor-critic algorithms that implement these ideas in practice.

Our experiments demonstrate that risk-seeking optimistic updates consistently improve coordination compared to risk-neutral and heuristic baselines, highlighting that principled optimism yields both theoretical clarity and practical gains.

Looking ahead, several promising directions remain open. One is to extend our algorithm design to broader families of risk measures and to establish rigorous sample-complexity guarantees for the proposed optimistic algorithms. Another is to adapt our approach for integration with established MARL methods such as MAPPO and QMIX. Finally, applying our framework to high-stakes domains—including autonomous driving, energy management, and distributed robotics—would provide a compelling test of its ability to enhance cooperation in real-world multi-agent systems.
\vspace{-10pt}

\bibliographystyle{ieeetr}
\bibliography{bib}
\vspace{-10pt}
\appendix
\subsection{Proof of Theorem \ref{theorem:policy-gradient-optimistic-value-function}}\label{apdx:proof-policy-gradient}
\begin{proof}

    Taking differential on both side of \eqref{eq:optimistic-value-function-bellman} we get
    \begin{talign*}
        &\quad \nabla_\theta V^{\pi^\theta}(s) = \frac{\partial \sigma(\pi_s^\theta, -Q^{\pi^\theta}(s,\cdot))}{\partial \theta}\\
        &= \sum_{a} \frac{\partial \sigma(\pi_s^\theta, -Q^{\pi^\theta}(s,\cdot))}{\partial \pi^\theta(a|s)}\pi^\theta(a|s)\nabla_\theta\log\pi^\theta(a|s) \\
        &\qquad +  \sum_{a} \frac{\partial \sigma(\pi_s^\theta, -Q^{\pi^\theta}(s,\cdot))}{\partial Q^{\pi^\theta}(s,a)}\nabla_\theta Q^{\pi^\theta}(s,a).
    \end{talign*}
    Further, from Lemma 6 in \cite{zhang2023softrobust}, we have that $\frac{\partial \sigma(\pi_s^\theta, -Q^{\pi^\theta}(s,\cdot))}{\partial Q^{\pi^\theta}(s,a)} = \hpi^\theta(a|s)$, where $\hpi$ is defined as in \eqref{eq:def-hpi}. Substituting this equation to the above equations we get
    \begin{talign*}
     \nabla_\theta V^{\pi^\theta}(s) = \sum_{a} \frac{\partial \sigma(\pi_s^\theta, -Q^{\pi^\theta}(s,\cdot))}{\partial \pi^\theta(a|s)}\pi^\theta(a|s)\nabla_\theta\log\pi^\theta(a|s)\\
     + \sum_a \hpi^\theta(a|s)\nabla_\theta Q^{\pi^\theta}(s,a).
    \end{talign*}
    \begin{talign*}
       \textup{Since}~ \nabla_\theta Q^{\pi^\theta}(s,a) 
        &\textstyle =\gamma \sum_{s'}P(s'|s,a) \nabla_\theta V^{\pi^\theta}(s'),
    \end{talign*}
    we have
    \begin{talign*}
        \nabla_\theta V^{\pi^\theta}(s_0) \!=\! \sum_{a} \!\frac{\partial \sigma(\pi_{s_0}^\theta, -Q^{\pi^\theta}(s_0,\cdot))}{\partial \pi^\theta(a|s_0)}\pi^\theta(a|s_0)\nabla_{\!\theta}\log\pi^\theta(a|s_0) \\+\gamma \bE_{s_1\sim P(\cdot|s_0,a_0), a_0\sim\hpi^\theta(\cdot|s_0)}\nabla V^{\pi^\theta}(s_1).
    \end{talign*}
    Applying this equation iteratively we get
    \begin{small}
    \begin{talign}
        &\nabla_\theta V^{\pi^\theta}(s_0) =\notag\\
        &\sum_{t\!=\!0}^{\!+\!\infty} \!\gamma^t \bE_{s_t, a_t\sim \hpi^\theta\!, P} \!
       \sum_{a} \!\frac{\partial \sigma(\pi_{s_t}^\theta, -Q^{\pi^\theta}(s_t,\cdot))}{\partial \pi^\theta(a|s_t)}\pi^\theta(a|s_t)\nabla_\theta\!\log\pi^\theta(a|s_0)\notag\notag\\
        &= \frac{1}{1-\gamma}\sum_{s} d^{\hpi^\theta}_{s_0}(s) \sum_{a} \frac{\partial \sigma(\pi_s^\theta, -Q^{\pi^\theta}(s,\cdot))}{\partial \pi^\theta(a|s)}\pi^\theta(a|s)\nabla_\theta\log\pi^\theta(a|s)\label{eq:PG-theorem-proof}
    \end{talign}
    \end{small}
   \noindent Specifically, when $\sigma(\pi^{\theta}_s, \!-\!Q^{\pi^{\!\theta}}\!(s,\!\cdot)) \!=\!  \beta^{\!-\!1}\!\log\!\bE_{a\sim \pi^{\theta}_s}e^{\beta Q^{\pi^{\!\theta}}\!\!(s,a)}$,
    we have $ \hpi^\theta(a|s) \propto \pi^\theta(a|s) e^{\beta Q^{\pi^\theta}(s,a)}$, and that
    \begin{talign*}
        \frac{\partial \sigma(\pi_s^\theta, -Q^{\pi^\theta}(s,\cdot))}{\partial \pi^\theta(a|s)} = \beta^{-1}\frac{e^{\beta Q^{\pi^\theta}(s,a)}}{\bE_{a\sim \pi^{\theta}_s}e^{\beta Q^{\pi^\theta}(s,a)}}.
    \end{talign*}
    Additionally, since 
    \begin{talign*}
        &V^{\pi^\theta}(s) = \sigma(\pi^{\theta}_s, -Q^{\pi^\theta}(s,\cdot)) =  \beta^{-1}\log\bE_{a\sim \pi^{\theta}_s}e^{\beta Q^{\pi^\theta}(s,a)} \\
        &\Longrightarrow~~
        \bE_{a\sim \pi^{\theta}_s}e^{\beta Q^{\pi^\theta}(s,a)} = e^{\beta V^{\pi^\theta}(s)}.
    \end{talign*}
    \begin{talign*}
       \textup{Thus}~~ \frac{\partial \sigma(\pi_s^\theta, -Q^{\pi^\theta}(s,\cdot))}{\partial \pi^\theta(a|s)} = \beta^{-1}\frac{e^{\beta Q^{\pi^\theta}(s,a)}}{e^{\beta V^{\pi^\theta}(s)}} = \beta^{-1} e^{\beta A^{\pi^\theta}(s,a)}.
    \end{talign*}
    Substitute this into \eqref{eq:PG-theorem-proof} we get
    \begin{small}    
    \begin{talign*}
        \nabla_\theta V^{\pi^\theta}(s_0)\!=\!\frac{1}{\beta(1\!-\!\gamma)}\sum_{s,a} d^{\hpi^\theta}_{s_0}(s) e^{\beta A^{\pi^\theta}(s,a)}\pi^\theta(a|s)\nabla_{\!\theta}\log\pi^\theta(a|s),
    \end{talign*}
    \end{small}
which completes the proof.
\end{proof}

\subsection{Equilibrium Convergence}\label{sec:equilibrium_convergence}
This section establishes the connection between Nash equilibria and first-order stationary points. 
We begin by defining the first-order stationary point and Nash equilibrium (NE).
\begin{defi}\label{def:first-order}{(First-order stationary policy)}
A set of policy parameters $\theta$ is called a first-order stationary policy if $(\theta' -\theta)^\top \nabla_{\theta} \bE_{s\sim\rho} V^{\pi^\theta}(s)\le 0$ for all possible $\theta'$.
\end{defi}
Note that first stationary points can be easily found by policy gradient type of methods.
\begin{defi}\label{def:NE}{(Nash Equilibrium)}
    For decentralized policy class (Eq. 1), where agent $i$'s policy is denoted as $\pi_i$, then policy $\pi = \{\pi_i\}_{i=1}^n$ is called a Nash equilibrium if
    \begin{align*}
        \bE_{s\sim\rho}\left(V^{\pi_i, \pi_{-i}}(s)\right)\ge V^{\pi_i', \pi_{-i}}(s), ~~\forall~\pi_i',~ i\in \{1,2,\dots,n\},
    \end{align*}
\end{defi}
i.e., no agent can increase the value function by unilaterally deviating from the current policy. Here $\pi_{-i}$ denotes the concatenation of all other agents policy except for $i$.

Given the definitions, we now show that these two notions are equivalent under mild assumptions:
\begin{theorem}
    Assume that the initial condition satisfies $\rho(s) >0,~\forall s$, and consider the setting with decentralized policy and direct parameterization, i.e., the policy parameter is given by $\theta = (\theta_1, \theta_2, \dots, \theta_n)$ where $\theta_i$ takes the form of Eq. (2) in the paper, and the policy is given by $\pi^\theta = \{\pi_i^{\theta_i}\}_{i=1}^n$, then for any $\pi^\theta$ that is a determinisitic policy, then $\theta$ is a first-order stationary point is equivalent to $\pi^\theta$ is a Nash equilibrium.
\begin{proof}
    Note that for deterministic policy $\pi$,
    \begin{align*}
        V^\pi(s) \!&=\! \max_{\hpi}\! \left[\!\bE_{s_t, a_t\sim \hpi}\sum_{t=0}^{+\infty}\gamma^t \left(r(s_t, a_t) \!-\! D(\hpi_{t, s_t}||\pi_{s_t})\right)\Big|s_0\!=\!s\right]\\
        &= \left[\bE_{s_t, a_t\sim \pi}\sum_{t=0}^{+\infty}\gamma^t r(s_t, a_t)\Big|s_0=s\right],
    \end{align*}
    because in this case the auxiliary optimistic policy $\hpi = \pi$.

    It is relatively easy to show that a Nash equilibrium is a first-order stationary point:
    \begin{align*}
        V^{(1-\eta)\pi_i^\theta\!+\!\eta (\pi_i^{\theta'}, \pi_{-i}^\theta)}(s) \!-\! V^{\pi_i^\theta, \pi_{-i}^\theta} \!=\! \eta(\theta_i \!-\! \theta_i')^\top \nabla_{\theta_i} V^{\pi^\theta}(s) \!+\! o(\eta),
    \end{align*}
    and let $\eta\to 0$ we prove the claim that a pure strategy Nash equilibrium is a first-order stationary point.

    We now prove that when a deterministic policy $\pi^\theta$ is a first-order stationary point, it is also a NE:

    Note that for deterministic policy $A^{\pi^\theta}(s,\pi^\theta(s))=0,$ and thus
    \begin{align*}
        \oA_i^{\pi^\theta}(s,\pi^{\theta_i}_i(s)) =  \beta^{-1} e^{\beta A^{\pi^\theta}(s,\pi^\theta(s))} = \beta^{-1}
    \end{align*}
    From the fact that $\theta$ is a first order stationary point, by definition we have that
    \begin{align*}
        &\sum_{a_i}(\hpi_i(a_i|s) - \pi^{\theta_i}(a_i|s))\nabla_{\theta_{s,a_i}}\bE_{s_0\sim\rho}V^{\pi^\theta}(s_0)\le 0, \forall \hpi_i\\
        \Longrightarrow~ &d^{\pi^\theta}(s)\sum_{a_i}(\hpi_i(a_i|s) - \pi^{\theta_i}(a_i|s))\oA_i^{\pi^\theta}(s,a_i)\le 0, \forall \hpi_i\\
        \Longrightarrow~ &\sum_{a_i}\hpi_i(a_i|s)e^{\beta A^{\pi^\theta}(s, a_i, \pi_{-i}^\theta(s))} \le 1, ~~\forall ~\hpi_i\\
        \Longrightarrow~& \forall a_i, A^{\pi^\theta}(s, a_i, \pi_{-i}^\theta(s))\le 0
    \end{align*}
    and thus from performance difference lemma (\cite{ZHANGgradient,kakade2002approximately}) we have
    \begin{align*}
    &\quad   \bE_{s_0\sim\rho} V^{\pi_i^{\theta_i'}, \pi_{-i}^{\theta_{-i}}}(s_0) - V^{\pi_i^{\theta_i}, \pi_{-i}^{\theta_{-i}}}(s_0) \\
      &\le \! \max_{\hpi_i} \bE_{\!\!\!s_t, a_t\sim (\hpi_i,\pi_{-i}^{\theta_{-i}})}\!\sum_{t=0}^{+\infty}\!\gamma^t r(s_t, a_t) \!-\! \bE_{s_t, a_t\sim \pi^\theta}\!\sum_{t=0}^{+\infty}\!\gamma^t r(s_t, a_t)\\
       & =\! \sum \!d^{\pi_i^{\theta_i'}, \pi_{-i}^{\theta_{-i}}}(s)\sum_{a_i}(\hpi_i(a_i|s)\! -\!\pi_i^{\theta_i}(a_i|s)) A^{\pi^\theta}\!(s, a_i, \pi_{-i}^\theta(s))\\
       &\le 0,
    \end{align*}    
    which proves that $\pi^\theta$ is a Nash equilibrium.
\end{proof}
\end{theorem}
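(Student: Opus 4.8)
The plan is to exploit the fact that for a \emph{deterministic} baseline policy the optimistic value function collapses to the ordinary risk-neutral value function. When $\pi_s$ is a point mass, any auxiliary $\hpi$ with $\KL(\hpi\,\|\,\pi_s)<\infty$ must share its support, which forces $\hpi=\pi$; hence $V^{\pi}(s)=\bE_{\pi}\sum_t\gamma^t r(s_t,a_t)$ and in particular $A^{\pi}(s,\pi(s))=0$. With this reduction in hand I would prove the two implications separately, using the optimistic gradient formula of \cref{lemma:multi-agent-PG} for one direction and the performance-difference lemma for the other.

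For the direction \emph{NE $\Rightarrow$ stationary}, I would fix an agent $i$ and an arbitrary deviation $\pi_i'$ and consider the mixed policy $(1-\eta)\pi_i+\eta\pi_i'$, which under direct parameterization corresponds to the parameter $\theta_i+\eta(\theta_i'-\theta_i)$. A first-order Taylor expansion gives
\[
\bE_{\rho}V^{(1-\eta)\pi_i+\eta\pi_i',\,\pi_{-i}}-\bE_\rho V^{\pi}=\eta\,(\theta_i'-\theta_i)^\top\nabla_{\theta_i}\bE_\rho V^{\pi^\theta}+o(\eta).
\]
Since the mixture is itself an admissible deviation, the NE property makes the left-hand side nonpositive; dividing by $\eta>0$ and sending $\eta\to 0^+$ yields $(\theta_i'-\theta_i)^\top\nabla_{\theta_i}\bE_\rho V^{\pi^\theta}\le 0$. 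Because the gradient decomposes across the per-agent blocks, summing over $i$ recovers the stationarity inequality of \cref{def:first-order} for every admissible $\theta'$.

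For the converse I would start from the stationarity inequality restricted to perturbations changing only agent $i$'s distribution at a single state $s$, and substitute the gradient of \cref{lemma:multi-agent-PG}, noting $\hpi^\theta=\pi^\theta$ for deterministic policies. The assumption $\rho(s)>0$ guarantees $d^{\pi^\theta}_\rho(s)>0$, so this positive factor cancels, leaving $\sum_{a_i}(\hpi_i(a_i|s)-\pi_i(a_i|s))\,\oA_i^{\pi^\theta}(s,a_i)\le 0$ for all $\hpi_i$. Using $\oA_i^{\pi^\theta}(s,a_i)=\beta^{-1}e^{\beta A^{\pi^\theta}(s,a_i,\pi_{-i}(s))}$ together with $\oA_i^{\pi^\theta}(s,\pi_i(s))=\beta^{-1}$ (from $A^{\pi}(s,\pi(s))=0$) and testing against point masses $\hpi_i=\delta_{a_i}$, this reduces to $A^{\pi^\theta}(s,a_i,\pi_{-i}(s))\le 0$ for every $s$ and $a_i$.

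The final and most delicate step converts this advantage condition into the global NE inequality for an arbitrary, possibly stochastic, deviation $\pi_i'$, and I expect it to be the main obstacle. The key is the bound $V^{\pi_i',\pi_{-i}}(s)\le\max_{\hpi_i}\bE_{(\hpi_i,\pi_{-i})}\sum_t\gamma^t r(s_t,a_t)$: because $\pi_{-i}$ is deterministic, the KL penalty in the optimistic value forces the auxiliary policy to keep $a_{-i}=\pi_{-i}(s)$ and reduces to $\beta^{-1}\KL(\hpi_i\,\|\,\pi_i')$, and dropping this nonnegative term yields the risk-neutral upper bound. Applying the performance-difference lemma between $(\hpi_i,\pi_{-i})$ and $\pi^\theta$ then writes the right-hand side as a $d$-weighted sum of $A^{\pi^\theta}(s,a_i,\pi_{-i}(s))$ terms, each nonpositive by the previous step; hence $\bE_\rho V^{\pi_i',\pi_{-i}}\le\bE_\rho V^{\pi}$ for every $\pi_i'$, establishing the NE property.
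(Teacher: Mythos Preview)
Your proposal is correct and follows essentially the same approach as the paper's proof: both observe that a deterministic baseline collapses the optimistic value to the risk-neutral one (forcing $\hpi=\pi$), handle NE $\Rightarrow$ stationary via a first-order Taylor expansion along the segment $\theta_i+\eta(\theta_i'-\theta_i)$, and handle stationary $\Rightarrow$ NE by combining the gradient formula of \cref{lemma:multi-agent-PG} (using $\rho(s)>0$ to cancel $d^{\pi^\theta}(s)$) with the identity $\oA_i^{\pi^\theta}(s,\pi_i(s))=\beta^{-1}$ to extract $A^{\pi^\theta}(s,a_i,\pi_{-i}(s))\le 0$, then bound $V^{\pi_i',\pi_{-i}}$ by the risk-neutral best response and apply the performance-difference lemma. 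Your write-up is in fact a bit more explicit than the paper's on why the KL structure permits dropping the penalty in the last step, but the argument is the same.
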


\end{document}